\newcommand{\N}{\mathbb{N}}                                     
\newcommand{\R}{\mathbb{R}}                                     
\newcommand{\innerprod}[2]{\left\langle #1,\, #2 \right\rangle} 
\providecommand{\norm}[1]{\left\lVert #1 \right\rVert}          
\newcommand{\ts}{\hspace*{0.1em}}                               
\definecolor{boxback}{gray}{0.95}
\newcommand{\T}{\mathbf{T}}
\newcommand{\Id}{\mathrm{Id}}
\newcommand{\PSI}{\mathbf{\Psi}}
\newcommand{\XI}{\mathbf{\Xi}}
\newcommand{\mat}[3]{#1 \left| \begin{array}{@{}l@{}} \scriptstyle #3 \\[0cm] \scriptstyle #2 \end{array} \right.}
\newcommand{\algrule}{\par\vskip.25\baselineskip\hrule height 0.4pt\par\vskip.75\baselineskip}
\algrenewcommand\algorithmicrequire{\textbf{Input:}}
\algrenewcommand\algorithmicensure{\textbf{Output:}}
\begin{document}

\title{Tensor-based algorithms for image \newline classification}

\abstract{The interest in machine learning with tensor networks has been growing rapidly in recent years. We show that tensor-based methods developed for learning the governing equations of dynamical systems from data can, in the same way, be used for supervised learning problems and propose two novel approaches for image classification. One is a kernel-based reformulation of the previously introduced MANDy (multidimensional approximation of nonlinear dynamics), the other an alternating ridge regression in the tensor-train format. We apply both methods to the MNIST and fashion MNIST data set and show that the approaches are competitive with state-of-the-art neural network-based classifiers.}

\keywords{quantum machine learning, image classification, tensor-train format, kernel-based methods, ridge regression}

\msc{
15A69, 
62J07, 
65D18, 
68Q32  
}

\doi{}

\author{Stefan Klus}{Department of\\Mathematics and\\Computer Science,\\Freie Universität Berlin,\\Berlin 14195, Germany}
\author{Patrick Gel\ss}{Department of\\Mathematics and\\Computer Science,\\Freie Universität Berlin,\\Berlin 14195, Germany}
\email{stefan.klus@fu-berlin.de}

\maketitle

\section{Introduction}

Tensor-based methods have become a powerful tool for scientific computing over the last years. In addition to many application areas such as quantum mechanics and computational dynamics where low-rank tensor approximations have been successfully applied, using tensor networks for supervised learning has gained a lot of attention recently. In particular the canonical format and the tensor-train format have been considered for quantum machine learning\footnote{There are different research directions in the field of quantum machine learning, we here understand it as using quantum computing capabilities for machine learning problems.} problems, see, e.g., \cite{BEYLKIN2009, NOVIKOV2015, COHEN2016}. A tensor-based algorithm for image classification using sweeping techniques inspired by the \emph{density matrix renormalization group} (DMRG)~\cite{WHITE1992} was proposed in \cite{STOUDENMIRE2016a, STOUDENMIRE2016b} and further discussed in \cite{STOUDENMIRE2018, HUGGINS2019}. Interestingly, also researchers at Google are currently developing a tensor-based machine learning framework called \emph{TensorNetwork}\footnote{\url{http://github.com/google/TensorNetwork}} \cite{ROBERTS2019, EFTHYMIOU2019}. The goal is to expedite the adoption of such methods by the machine learning community.

Our goal is to show that recently developed methods for recovering the governing equations of dynamical systems can be generalized in such a way that they can also be used for supervised learning tasks, e.g., classification problems. In order to learn the governing equations from simulation or measurement data, regression methods such as SINDy (\emph{sparse identification of nonlinear dynamics})~\cite{BRUNTON2016, RUDY2017} and its tensor-based reformulation MANDY (\emph{multidimensional approximation of nonlinear dynamics})~\cite{GELSS2019} can be applied. The main challenge is often to choose the right function space from which the system representation is learned. While SINDy and MANDy essentially select functions from a potentially large set of basis functions by applying regularized regression methods, other approaches allow nested functions and typically result in nonlinear optimization problems, which are then frequently solved using (stochastic) gradient descent. By constructing a basis comprising tensor products of simple functions (e.g., functions depending only on one variable), extremely high-dimensional feature spaces can be generated.

In this work, we explain how to compute the pseudoinverse required for solving the minimization problem directly in the tensor-train (TT) format, i.e., we replace the iterative approach from~\cite{STOUDENMIRE2016a, STOUDENMIRE2016b} by a direct computation of the least-squares solution and point out similarities with the aforementioned system identification methods. The reformulated algorithm can be regarded as a \emph{kernelized} variant of MANDy, where the kernel is based on tensor products. This is also related to quantum machine learning ideas: As pointed out in~\cite{SCHULD2019}, the basic idea of quantum computing is similar to kernel methods in that computations are performed implicitly in otherwise intractably large Hilbert spaces. While kernel methods were popular in the 1990s, the focus of the machine learning community has shifted to deep neural networks in recent years~\cite{SCHULD2019}. We will show that for simple image classification tasks, kernels based on tensor products are competitive with neural networks.

In addition to the kernel-based approach, we propose another DMRG-inspired method for the construction of TT decompositions of weight matrices containing the coefficients for the selected basis functions. Instead of computing pseudoinverses, a core-wise \emph{ridge regression} \cite{SHAWE2004} is applied to solve the minimization problem. While the approach introduced in~\cite{STOUDENMIRE2016a, STOUDENMIRE2016b} only involves tensor contractions corresponding to single images of the training data set, we use TT representations of \emph{transformed data tensors}, see~\cite{GELSS2019, NUESKE2019}, in order to include the entire training data set at once for constructing low-dimensional systems of linear equations. Combining an efficient computational scheme for the corresponding subproblems and \emph{truncated singular value decompositions}~\cite{HANSEN1987}, we call the resulting algorithm \emph{alternating ridge regression} (ARR) and discuss connections to MANDy and other regularized regression techniques.

While we describe the classification problems using the example of the iconic MNIST data set~\cite{LECUN1998} and the fashion MNIST data set~\cite{XIAO2017}, the derived algorithms can be easily applied to other classification problems. There is a plethora of kernel and deep learning methods for image classification, a list of the most successful methods for the MNIST and fashion MNIST data sets including nearest-neighbor heuristics, support vector machines, and convolutional neural networks can be found on the respective website.\!\footnote{\url{http://yann.lecun.com/exdb/mnist/}}\textsuperscript{,}\footnote{\url{http://github.com/zalandoresearch/fashion-mnist}} We will not review these methods in detail, but instead focus on relationships with data-driven methods for analyzing dynamical systems. The main contributions of this paper are:

\begin{itemize}[leftmargin=*]
  \item \emph{Extension of MANDy}: We show that the efficacy of the pseudoinverse computation in the tensor-train format can be improved by eliminating the need to left- and right-orthonormalize the tensor. While this is a straightforward modification of the original algorithm, it enables us to consider large data sets. The resulting method is closely related to kernel ridge regression. 
  \item \emph{Alternating ridge regression}: We introduce a modified TT representation of transformed data tensors for the development of a tensor-based regression technique which computes low-rank representations of coefficient tensors. We show that it is possible to obtain results which are competitive with those computed by MANDy, and, at the same time, reduce the computational costs and the memory consumption significantly.
  \item \emph{Classification of image data}: While originally designed for system identification, we apply these methods to classification problems and visualize the learned classifier, which allows us to interpret features detected in the images.
\end{itemize}

The remainder is structured as follows: In Section~\ref{sec:Prerequisites}, we will describe methods to learn governing equations of dynamical systems from data as well as a tensor-based iterative scheme for image classification and highlight their relationships. In Section~\ref{sec: tensor-based classification algorithms}, we describe how to apply MANDy to classification problems and introduce the ARR approach based on the alternating optimization of TT cores. Numerical results are presented in Section~\ref{sec:Numerical results}, followed by a brief summary and conclusion in Section~\ref{sec:Conclusion}.

\section{Prerequisites}
\label{sec:Prerequisites}

We will introduce the original MNIST and the fashion MNIST data set, which will serve as guiding examples. Afterwards SINDy and MANDy as well as tensor-based methods for image classification problems will be briefly discussed. In what follows, we will use the notation summarized in Table~\ref{tab: notation}.

\begin{table}[h]
\renewcommand{\arraystretch}{1.3}
  \caption{Notation used in this work.}
  \centering
  \begin{tabular}{ll}
    \hline
    \textbf{Symbol} & \textbf{Description} \\
    \hline
    $X = \big[ x^{(1)}, \dots, x^{(m)}\big]$ & data matrix in $\R^{d \times m}$\\
    $Y = \big[ y^{(1)}, \dots, y^{(m)}\big]$ & label matrix in $\R^{d^\prime \times m}$\\
    $n_1, \dots, n_p$  & mode dimensions of tensors \\
    $r_0, \dots, r_p$  & ranks of tensor trains \\
    $\psi_1, \dots, \psi_p$ & basis functions $\psi_\mu \colon \R^d \to \R^{n_\mu}$\\
    $\Psi_X$ / $\PSI_X$ & transformed data matrices/tensors\\
    $\Xi$ / $\XI$ & coefficient matrices/tensors\\
    \hline
  \end{tabular}
  \label{tab: notation}
\end{table}

\subsection{MNIST and fashion MNIST}

The MNIST data set \cite{LECUN1998}, see Figure~\ref{fig:MNIST_samples}\,(a), contains grayscale\footnote{The methods described below can be easily extended to color images by defining basis functions for each primary color.} images of hand-written digits and the associated labels. The data set is split into $ 60\ts000$ images for training and $ 10\ts000 $ images for testing. Each image is of size $ 28 \times 28 $. Let $ d = 784 $ be the number of pixels of one image and let the images, reshaped as vectors, be denoted by $ x^{(j)} \in \R^d $ and the corresponding labels by $ y^{(j)} \in \R^{d'} $, where $ d' = 10 $ is the number of different classes. Each label encodes a number in $ \{ 0, \dots, 9 \} $ and the entries $ y_i^{(j)} $ of the vector $ y^{(j)} $ are given by
\begin{equation}
    y_i^{(j)} =
    \begin{cases}
        1, & \text{if } x^{(j)} \text{ contains the number } i-1, \\
        0, & \text{otherwise},
    \end{cases}
\end{equation}
i.e., $ y^{(j)} = [1, 0, 0, \dots, 0]^\top $ represents $ 0 $, $ y^{(j)} = [0, 1, 0, \dots, 0]^\top $ represents $ 1 $, etc. This is also called \emph{one-hot encoding} in machine learning.

\begin{figure*}
    \centering
    \begin{subfigure}{0.3\textwidth}
        \centering
        \caption{}
        \includegraphics[height=4.5cm]{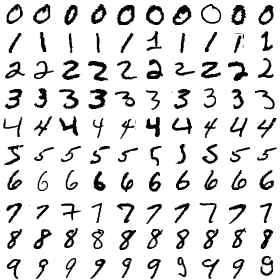}
    \end{subfigure}%
    \hspace*{7ex}
    \begin{subfigure}{0.3\textwidth}
        \centering
        \caption{}
        \includegraphics[height=4.5cm]{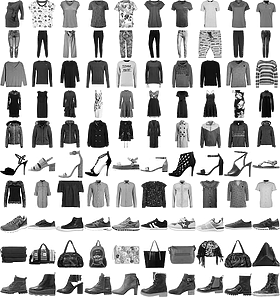}
    \end{subfigure}%
    \hspace*{7ex}
    \begin{subfigure}{0.2\textwidth}
        \centering
        \caption{}
        \renewcommand\arraystretch{1.06}
        \begin{tabular}{l@{~~}l}
            0 & T-shirt/top \\
            1 & Trousers \\
            2 & Pullover \\
            3 & Dress \\
            4 & Coat \\
            5 & Sandal \\
            6 & Shirt \\
            7 & Sneaker \\
            8 & Bag \\
            9 & Ankle boot
        \end{tabular}
    \end{subfigure}
    \caption{(a)~Samples of the MNIST data set. (b)~Samples of the fashion MNIST data set. Each row represents a different item type. (c)~Corresponding labels for the fashion MNIST data set.}
    \label{fig:MNIST_samples}
\end{figure*}

The fashion MNIST data set \cite{XIAO2017} can be regarded as a shoo-in replacement for the original data set. There are again $60\ts000$ training and $10\ts000$ test images of size $ 28 \times 28 $. Some samples are shown in Figure~\ref{fig:MNIST_samples}\,(b) and the corresponding labels in Figure~\ref{fig:MNIST_samples}\,(c). Given a picture of a clothing item, the goal now is to identify the correct category, which is encoded as described above.

\subsection{SINDy}
\label{sec: SINDy}

SINDy \cite{BRUNTON2016} was originally developed to learn the governing equations of dynamical systems from data. We will show how it can, in the same way, be used for classification problems. Consider an autonomous ordinary differential equation of the form $ \dot{x} = f(x) $, with $ f \colon \R^d \to \R^d $. Given $ m $ measurements of the state of the system, denoted by $ x^{(j)} $, $ j = 1,\dots, m $, and the corresponding time derivatives $ y^{(j)} := \dot{x}^{(j)} $, the goal is to reconstruct the function $ f $ from the measurement data. Let $X = [x^{(1)}, \dots, x^{(m)}] \in \R^{d \times m}$ and $Y = [y^{(1)}, \dots, y^{(m)}] \in \R^{d \times m}$. That is, $ d^\prime = d $ in this case. In order to represent $ f $, we select a vector-valued basis function $ \Psi \colon \R^d \to \R^n $ and define the transformed data matrix $ \Psi_X \in \R^{n \times m} $ by
\begin{equation} \label{eq:basis matrix}
    \Psi_X =
    \begin{bmatrix}
        \Psi(x^{(1)}) & \dots & \Psi(x^{(m)})
    \end{bmatrix}.
\end{equation}
Omitting sparsity constraints, SINDy then boils down to solving
\begin{equation} \label{eq:SINDy}
    \min_{\Xi} \norm{ Y - \Xi^\top \Psi_X }_F,
\end{equation}
where
\begin{equation}
    \Xi = \begin{bmatrix} \xi_1 & \dots & \xi_d \end{bmatrix} \in \R^{n \times d}
\end{equation}
is the coefficient matrix. Each column vector $ \xi_i $ then represents a function $ f_i $, i.e.,
\begin{equation}
    y_i^{(j)} \approx f_i(x^{(j)}) = \xi_i^\top \Psi(x^{(j)}).
\end{equation}
We thus obtain a model of the form $ \dot{x} = \Xi^\top \Psi(x) $, which approximates the possibly unknown dynamics. The solution of the minimization problem \eqref{eq:SINDy} with minimal Frobenius norm is given by
\begin{equation} \label{eq:pinv}
    \Xi^\top = Y \ts \Psi_X^+,
\end{equation}
where $ ^+ $ denotes the pseudoinverse, see \cite{GOLUB2013}.

\subsection{Tensor-based learning}

We will now briefly introduce the basic concepts of tensor decompositions and tensor formats as well as the tensor-based reformulation of SINDy, called MANDy, proposed in \cite{GELSS2019}. Additionally, recently introduced methods for supervised learning with tensor networks will be discussed.

\subsubsection{Tensor decompositions}

In order to mitigate the curse of dimensionality when working with tensors $\mathbf{T} \in \mathbb{R}^{n_1 \times \dots \times n_p}$, where $n_\mu \in \N$, we will exploit low-rank tensor approximations. The simplest approximation of a tensor of order $p$ is a \emph{rank-one tensor}, i.e., a tensor product of $p$ vectors given by

\begin{equation}
  \mathbf{T} = T^{(1)} \otimes T^{(2)} \otimes \dots \otimes T^{(p)},
\end{equation}
where $T^{(\mu)}$, $\mu = 1, \dots, p$, are vectors in $\R^{n_\mu}$. If a tensor is written as the sum of $r$ rank-one tensors, i.e.,
\begin{equation}
  \mathbf{T} = \sum_{k=1}^r T^{(1)}_{:, k} \otimes T^{(2)}_{:, k} \otimes \dots \otimes T^{(p)}_{:, k},
\end{equation}
with $T^{(\mu)} \in \R^{n_\mu \times r}$, this results in the so-called \emph{canonical format}. In fact, any tensor can be expressed in this format, but we are particularly interested in low-rank representations of tensors in order to reduce the storage consumption as well as the computational costs. The same requirement applies to tensors expressed in the \emph{tensor-train format} (\emph{TT format}), where a high-dimensional tensor is represented by a network of multiple low-dimensional tensors \cite{OSELEDETS2009a,OSELEDETS2011}. A tensor $\mathbf{T} \in \R^{n_1 \times \dots \times n_p}$ is said to be in the TT format if
\begin{equation}
  \mathbf{T} = \sum_{k_0=1}^{r_0} \cdots  \sum_{k_p=1}^{r_p}  \mathbf{T}^{(1)}_{k_0,:,k_1} \otimes \dots \otimes  \mathbf{T}^{(p)}_{k_{p-1},:,k_p}.
\end{equation}
The tensors $\mathbf{T}^{(\mu)} \in \R^{r_{\mu-1} \times n_\mu \times r_\mu}$ of order 3 are called \emph{TT cores}. The numbers $r_\mu$ are called \emph{TT ranks} and have a strong influence on the expressivity of a tensor train. It holds that $r_0 = r_p =1$ and $r_\mu \geq 1$ for $\mu=1, \dots, p-1$. Figure~\ref{fig: tensor train}\,(a) shows the graphical representation of a tensor train, which is also called Penrose notation, see~\cite{PENROSE1971}. 

The left- and right-unfoldings of a TT core $\T^{(\mu)}$ are given by the matrices
\begin{equation}
  \mathcal{L}_\mu = \mat{\T^{(\mu)}}{r_{\mu-1}, n_\mu}{r_\mu} \in \R^{(r_{\mu-1} \cdot n_\mu) \times r_\mu} \quad \text{and} \quad \mathcal{R}_\mu = \mat{\T^{(\mu)}}{r_{\mu-1}}{n_\mu, r_\mu} \in \R^{r_{\mu-1} \times ( n_\mu \cdot r_\mu)},
\end{equation}
respectively. Here, the indices of two modes of $\T^{(\mu)}$ are lumped into a single row or column index while the remaining mode forms the other dimension of the unfolding matrix. We call the TT core $\T^{(\mu)}$ left-orthonormal if its left-unfolding is orthonormal with respect to the rows, i.e., $\mathcal{L}_\mu^\top \cdot \mathcal{L}_\mu = \Id \in \R^{r_\mu \times r_\mu}$. Correspondingly, a core is called right-orthonormal if its right-unfolding is orthonormal with respect to the columns, i.e., \mbox{$\mathcal{R}_\mu \cdot \mathcal{R}_\mu^\top = \Id \in \R^{r_{\mu-1} \times r_{\mu-1}}$}. In Penrose notation, orthonormal components are depicted by half-filled circles, cf.~Figure~\ref{fig: tensor train}\,(b), where a tensor train with left-orthonormal cores is shown.

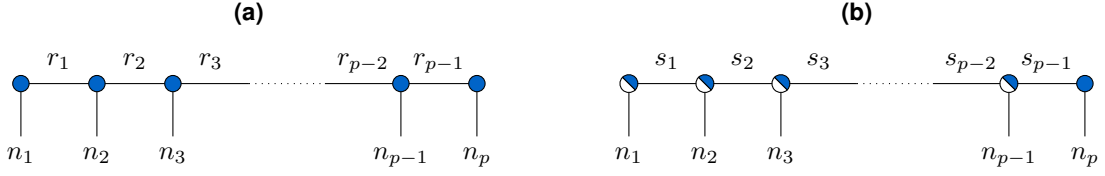
\begin{figure}
  \centering
  \begin{subfigure}{0.5\textwidth}
   \centering
   \caption{}
    \begin{tikzpicture}
        \draw[black] (0,0) -- node [label={[text depth=0]$r_1$}] {} ++ (1,0) ;
        \draw[black] (1,0) -- node [label={[text depth=0]$r_2$}] {} ++ (1,0) ;
        \draw[black] (2,0) -- node [label={[text depth=0]$r_3$}] {} ++ (1,0) ;
        \draw[black, dotted] (3,0) -- ++ (1,0) ;
        \draw[black] (4,0) -- node [label={[text depth=0]$r_{p-2}$}] {} ++ (1,0) ;
        \draw[black] (5,0) -- node [label={[text depth=0]$r_{p-1}$}] {} ++ (1,0) ;
        \draw[black] (0,0) -- node [label={[shift={(0,-0.9)}, text depth=0]$n_1$}] {} ++ (0,-0.7) ;
        \draw[black] (1,0) -- node [label={[shift={(0,-0.9)}, text depth=0]$n_2$}] {} ++ (0,-0.7) ;
        \draw[black] (2,0) -- node [label={[shift={(0,-0.9)}, text depth=0]$n_3$}] {} ++ (0,-0.7) ;
        \draw[black] (5,0) -- node [label={[shift={(0,-0.9)}, text depth=0]$n_{p-1}$}] {} ++ (0,-0.7) ;
        \draw[black] (6,0) -- node [label={[shift={(0,-0.9)}, text depth=0]$n_p$}] {} ++ (0,-0.7) ;
        \node[draw,shape=circle,fill=Blue, scale=0.65] at (0,0){};
        \node[draw,shape=circle,fill=Blue, scale=0.65] at (1,0){};
        \node[draw,shape=circle,fill=Blue, scale=0.65] at (2,0){};
        \node[draw,shape=circle,fill=Blue, scale=0.65] at (5,0){};
        \node[draw,shape=circle,fill=Blue, scale=0.65] at (6,0){};
    \end{tikzpicture}
  \end{subfigure}%
  \begin{subfigure}{0.5\textwidth}
   \centering
   \caption{}
    \begin{tikzpicture}
        \draw[black] (0,0) -- node [label={[text depth=0]$s_1$}] {} ++ (1,0) ;
        \draw[black] (1,0) -- node [label={[text depth=0]$s_2$}] {} ++ (1,0) ;
        \draw[black] (2,0) -- node [label={[text depth=0]$s_3$}] {} ++ (1,0) ;
        \draw[black, dotted] (3,0) -- ++ (1,0) ;
        \draw[black] (4,0) -- node [label={[text depth=0]$s_{p-2}$}] {} ++ (1,0) ;
        \draw[black] (5,0) -- node [label={[text depth=0]$s_{p-1}$}] {} ++ (1,0) ;
        \draw[black] (0,0) -- node [label={[shift={(0,-0.9)}, text depth=0]$n_1$}] {} ++ (0,-0.7) ;
        \draw[black] (1,0) -- node [label={[shift={(0,-0.9)}, text depth=0]$n_2$}] {} ++ (0,-0.7) ;
        \draw[black] (2,0) -- node [label={[shift={(0,-0.9)}, text depth=0]$n_3$}] {} ++ (0,-0.7) ;
        \draw[black] (5,0) -- node [label={[shift={(0,-0.9)}, text depth=0]$n_{p-1}$}] {} ++ (0,-0.7) ;
        \draw[black] (6,0) -- node [label={[shift={(0,-0.9)}, text depth=0]$n_p$}] {} ++ (0,-0.7) ;
        \node[draw,shape=semicircle,rotate=135 ,fill=white, anchor=south,inner sep=2pt, outer sep=0pt, scale=0.75] at (0,0){}; 
        \node[draw,shape=semicircle,rotate=315,fill=Blue, anchor=south,inner sep=2pt, outer sep=0pt, scale=0.75] at (0,0){};
        \node[draw,shape=semicircle,rotate=135 ,fill=white, anchor=south,inner sep=2pt, outer sep=0pt, scale=0.75] at (1,0){}; 
        \node[draw,shape=semicircle,rotate=315,fill=Blue, anchor=south,inner sep=2pt, outer sep=0pt, scale=0.75] at (1,0){};
        \node[draw,shape=semicircle,rotate=135 ,fill=white, anchor=south,inner sep=2pt, outer sep=0pt, scale=0.75] at (2,0){}; 
        \node[draw,shape=semicircle,rotate=315,fill=Blue, anchor=south,inner sep=2pt, outer sep=0pt, scale=0.75] at (2,0){};
        \node[draw,shape=semicircle,rotate=135 ,fill=white, anchor=south,inner sep=2pt, outer sep=0pt, scale=0.75] at (5,0){}; 
        \node[draw,shape=semicircle,rotate=315,fill=Blue, anchor=south,inner sep=2pt, outer sep=0pt, scale=0.75] at (5,0){};
        \node[draw,shape=circle,fill=Blue, scale=0.65] at (6,0){};
    \end{tikzpicture}
  \end{subfigure}%
  \caption{Graphical representation of tensor trains: (a) A core is depicted by a circle with different arms indicating the modes of the tensor and the rank indices. The first and the last TT core are regarded as matrices due to the fact that $r_0 = r_p = 1$. (b) Left-orthonormalized tensor train obtained by, e.g., sequential SVDs. Note that the TT ranks may change due to orthonormalization, e.g., when using (reduced/truncated) SVDs.}
  \label{fig: tensor train}
\end{figure}

A given TT core can be left- or right-orthonormalized, respectively, by computing a singular value decomposition (SVD) of its unfolding. For instance, the components of an SVD of the form $ \mathcal{L}_\mu = U \cdot \Sigma \cdot V^\top$ can be interpreted as a left-orthonormalized version of $\T^{(\mu)}$ coupled with the matrices $\Sigma$ and $V^\top$. When we talk about, e.g., left-orthonormalization of the cores of a tensor train, we mean the application of sequential SVDs from left to right (also called \emph{HOSVD}, cf.~\cite{OSELEDETS2009b}) where $U$ builds the updated core, while the non-orthonormal part $\Sigma \cdot V^\top$ is contracted with the subsequent TT core. As described in \cite{KLUS2018, GELSS2019, NUESKE2019}, left- and right-orthonormalization can be used to construct pseudoinverses of tensors. The general idea is to construct a global SVD of a given tensor train by left- and right-orthonormalizing its cores. However, in Section~\ref{sec: Kernel-based MANDy} we will exploit the structure of transformed data tensors as introduced in \cite{GELSS2019} to propose a different method for the construction of pseudoinverses which significantly reduces the computational effort.

We also represent TT cores as two-dimensional arrays containing vectors as elements. In this notation, a single core of a tensor train $\mathbf{T} \in \R^{n_1 \times \dots \times n_p}$ is written as
\begin{equation}
  \left\llbracket \mathbf{T}^{(\mu)} \right\rrbracket = 
  \left\llbracket~
  \begin{matrix}
    \mathbf{T}^{(\mu)}_{1,:,1} & \cdots & \mathbf{T}^{(\mu)}_{1,:,r_\mu} \\
    & & \\
    \vdots & \ddots & \vdots \\
    & & \\
    \mathbf{T}^{(\mu)}_{r_{\mu-1},:,1} & \cdots & \mathbf{T}^{(\mu)}_{r_{\mu-1},:,r_\mu}
  \end{matrix}~\right\rrbracket.
  \label{eq: core notation - single core}
\end{equation}
Then, the expression $\mathbf{T} = \left\llbracket \mathbf{T}^{(1)}\right\rrbracket \otimes \dots \otimes \left\llbracket \mathbf{T}^{(p)}\right\rrbracket$ is used for representing tensor trains $\mathbf{T}$, cf.\ \cite{GELSS2016, GELSS2017, GELSS2019}. 

\subsubsection{MANDy}

MANDy \cite{GELSS2019} is a tensorized version of SINDy and constructs counterparts of the transformed data matrices \eqref{eq:basis matrix} directly in the TT format. Two different types of decompositions, namely the \emph{coordinate-} and the \emph{function-major decomposition}, were introduced in \cite{GELSS2019}. In \cite{NUESKE2019}, the technique for the construction of the transformed data tensors was generalized to arbitrary lists of basis functions. This will be explained in more detail in Section~\ref{sec: basis decomposition}. Given data matrices $X, Y \in \R^{d \times m}$ and basis functions $\psi_\mu \colon \R^d \to \R^{n_\mu}$, $\mu = 1, \dots, p$, the tensor-based representation of the corresponding transformed data tensors $\PSI_X \in \R^{n_1 \times \dots \times n_p \times m}$ enables us to solve the reformulated minimization problem
\begin{equation}\label{eq: MANDy optimization problem}
  \min_{\XI} \norm{Y - \XI^\top \PSI_X}_F
\end{equation}
so that the coefficients are given in form of a tensor train $\XI \in \R^{n_1 \times \dots \times n_p \times d}$, cf.~Section~\ref{sec: SINDy}. Instead of identifying the governing equations of dynamical systems from data, see \cite{GELSS2019}, we seek to classify images using MANDy. The only difference is that $ \PSI_X $ now contains the transformed images and $ Y $ the corresponding labels. Since the matrix $Y$ may have different dimensions than $X$, i.e., $Y \in \R^{d^\prime \times m}$, the aim is to find the optimal solution of \eqref{eq: MANDy optimization problem} in form of a tensor train \mbox{$\XI \in \R^{n_1 \times \dots \times n_p \times d^\prime}$}. We will discuss the explicit representation of transformed data tensors and their pseudoinversion in Section~\ref{sec: tensor-based classification algorithms}.

\subsubsection{Supervised learning with tensor networks}
\label{sec: supervised}

It has been shown in \cite{STOUDENMIRE2016a, STOUDENMIRE2016b} that tensor-based optimization schemes can be adapted to supervised learning problems. A given input vector $ x $ is mapped into a higher-dimensional space using a feature map $ \PSI $ before being classified by a \emph{decision function} $ f\colon \R^d \to \R^{d^\prime} $ of the form
\begin{equation} \label{eq:decision function}
    f(x) = \XI^\top \ts \PSI(x),
\end{equation}
where $ \XI $ is a coefficient tensor in TT format. The $ i $th entry of the vector $ f(x) $ then represents the likelihood that the image $ x $ belongs to the class with label $ i-1 $. The transformation defined in \cite{STOUDENMIRE2016a, STOUDENMIRE2016b} reads as follows:
\begin{equation} \label{eq:Psi}
    \mathbf{\PSI}(x) = 
    \begin{bmatrix}
        \cos(\alpha \ts x_1) \\ \sin(\alpha \ts x_1)
    \end{bmatrix} \otimes 
    \begin{bmatrix}
        \cos(\alpha \ts x_2) \\ \sin(\alpha \ts x_2)
    \end{bmatrix} \otimes
    \dots \otimes
    \begin{bmatrix}
        \cos(\alpha \ts x_d) \\ \sin(\alpha \ts x_d)
    \end{bmatrix},
\end{equation}
where $ \alpha $ is a parameter. It turns out that the originally proposed choice of $ \alpha = \frac{\pi}{2} $ is often not optimal. This will be discussed in more detail below. The function $\PSI$ assigns each pixel of the image a two-dimensional vector, inspired by the spin-vectors encountered in quantum mechanics~\cite{STOUDENMIRE2016b}. It was illustrated in \cite{SCHULD2019} how such a transformation can be implemented as a quantum feature map, where the information is encoded in the amplitudes of qubits. Embedding data into quantum Hilbert spaces might be interesting in cases where the quantum device evaluates kernels faster or where kernels cannot be simulated by classical computers anymore \cite{SCHULD2019}.

Due to the tensor structure, $ \PSI(x) $ is a tensor with $ 2^d $ entries, which, for the original MNIST image size amounts to $ n \approx 10^{236} $ basis functions. In \cite{STOUDENMIRE2016a, STOUDENMIRE2016b}, the image size is first reduced to $ 14 \times 14 $ pixels by averaging groups of four pixels, which then results in ``only'' $ n \approx 10^{59} $ basis functions. Thus, storing the full coefficient matrix is clearly infeasible since $ \XI \in \R^{2 \times \dots \times 2 \times d^\prime} \cong \R^{n \times d^\prime}$. Here, $d^\prime$ appears as an additional tensor index since the decision function is computed for all $ d^\prime $ labels simultaneously.

In order to learn the tensor $ \XI $ from training data, a DMRG/ALS-related algorithm (cf.~\cite{WHITE1992, HOLTZ2012}) that sweeps back and forth along the cores and iteratively minimizes the cost function
\begin{equation} \label{eq:Tensor minimization problem}
    \min_{\XI} \sum_{j=1}^m \norm{ y^{(j)} - \XI^\top \ts \PSI(x^{(j)}) }_2^2
\end{equation}
is devised. The suggested algorithm varies two neighboring cores at the same time, which allows for adapting the tensor ranks, and computes an update using a gradient descent step. The tensor ranks are reduced by truncated SVDs to control the computational costs. The truncation of the TT ranks can also be interpreted as a form of regularization. For more details, we refer to \cite{STOUDENMIRE2016a, STOUDENMIRE2016b}. 

Different techniques to improve the original algorithm presented in \cite{STOUDENMIRE2016a} were proposed. In \cite{LIU2018}, the image data is preprocessed using a discrete cosine transformation and the ordering of the pixels is optimized in order to reduce the ranks. In \cite{EFTHYMIOU2019}, the DMRG-based sweeping method was replaced by a stochastic gradient descent approach, where the gradient is computed with the aid of automatic differentiation. Furthermore, it was shown that GPUs allow for an efficient solution of such problems.

\section{Tensor-based classification algorithms}
\label{sec: tensor-based classification algorithms}

We will now describe two different tensor-based classification approaches. First, we show how to combine MANDy with kernel-based regression techniques in order to derive an efficient method for the computation of the pseudoinverse of the transformed data tensor. Then, a classification algorithm based on the alternating optimization of the TT cores of the coefficient tensor is proposed. 

\subsection{Basis decomposition}
\label{sec: basis decomposition}

As above, let $x \in \R^d$ be a vector and $\psi_{\mu} \colon \R^d \rightarrow \R^{n_\mu}$, $\mu = 1, \dots, p$, basis functions. We consider the rank-one tensors
\begin{equation}
  \label{eq: basis decomposition}
  \PSI(x) = \psi_1(x) \otimes \dots \otimes \psi_p(x) = \begin{bmatrix} \psi_{1,1} (x) \\ \vdots \\ \psi_{1, n_1}(x) \end{bmatrix} \otimes \dots \otimes \begin{bmatrix} \psi_{p,1} (x) \\ \vdots \\ \psi_{p,n_p}(x) \end{bmatrix} \in \R^{n_1 \times n_2 \times \dots \times n_p}.
\end{equation}

For $m$ different vectors stored in a data matrix $X = [x^{(1)}, \dots, x^{(m)}] \in \R^{d \times m}$, we want to construct transformed data tensors $\PSI_X \in \R^{n_1 \times \dots \times n_p \times m}$ with $(\PSI_X)_{:, \dots , :, j} = \PSI(x^{(j)})$. In \cite{GELSS2019, NUESKE2019}, this was achieved by multiplying (with the aid of the tensor product) the rank-one decompositions given in \eqref{eq: basis decomposition} for all vectors $x^{(1)}, \dots, x^{(m)}$ by additional unit vectors and subsequently summing them up. The transformed data tensor can then be represented using the following canonical/TT decompositions:
\begin{equation}\label{eq: transformed data tensor}
  \begin{split}
    \PSI_X  &= \sum_{j=1}^m \PSI\left(x^{(j)}\right) \otimes e_j \\
    &= \sum_{j=1}^m \psi_1\left(x^{(j)}\right) \otimes \dots \otimes \psi_p \left(x^{(j)} \right) \otimes e_j \\
    &= \left \llbracket \begin{matrix}
    \psi_1 \left(x^{(1)}\right) & \cdots & \psi_1 \left(x^{(m)}\right)
    \end{matrix} \right\rrbracket \otimes
    \left\llbracket\begin{matrix}
    \psi_2 \left(x^{(1)}\right) & & 0 \\
    & \ddots & \\
    0 & & \psi_2 \left(x^{(m)}\right)
    \end{matrix} \right\rrbracket \otimes \cdots \\
    & \qquad \cdots \otimes
    \left \llbracket \begin{matrix}
    \psi_p \left(x^{(1)}\right) & & 0 \\
    & \ddots & \\
    0 & & \psi_p \left(x^{(m)}\right) 
    \end{matrix} \right\rrbracket \otimes
    \left \llbracket \begin{matrix}
    e_1 \\
    \vdots \\
    e_m
    \end{matrix} \right \rrbracket\\
    &= \left \llbracket \PSI_X^{(1)} \right \rrbracket \otimes \dots \otimes \left \llbracket \PSI_X^{(p+1)} \right \rrbracket,
  \end{split}
\end{equation}
where $e_j$, $j = 1, \dots , m$, denote the unit vectors of the standard basis in the $m$-dimensional Euclidean space. An entry of $\PSI_X$ is given by
\begin{equation}
  \begin{split}
    \left( \PSI_X \right)_{i_1, \dots, i_p, j} &= \psi_{1,i_1}\left(x^{(j)}\right) \cdot \ldots \cdot \psi_{p,i_p}\left(x^{(j)}\right),
  \end{split}
\end{equation}
for $1 \leq i_k \leq n_k$ and $1 \leq j \leq m$. Thus, the matrix-based counterpart of $\PSI_X$, see \eqref{eq:basis matrix}, would be given by the mode-$p$ unfolding
\begin{equation}
  \Psi_X = \mat{\PSI_X}{n_1, \dots, n_p}{m}.
\end{equation}
That is, the modes $n_1 , \dots, n_p$ represent row indices of the unfolding, and mode $m$ is the column index. However, for the purpose of this paper, we modify the representation of our transformed data tensors. First, realize that the last core of the TT representation in \eqref{eq: transformed data tensor} can be neglected since it is only a reshaped identity matrix. The result is then a tensor network with an ``open arm'' which can be regarded as a tensor train with an additional column mode located at the last core, see Figure~\ref{fig: TDT}\,(a). Second, this additional mode can be shifted to any TT core of the decomposition. This is shown in Figure~\ref{fig: TDT}\,(b). We will benefit from these modifications in Section~\ref{sec: ALS approach} when constructing the subproblems for the ALS-inspired approach. Consider the TT decomposition $\widehat{\PSI}_X$ given by
\begin{equation}
  \begin{split}
    \widehat{\PSI}_X
    &= \left \llbracket \PSI_X^{(1)} \right \rrbracket \otimes \dots \otimes \left \llbracket \PSI_X^{(p-1)} \right \rrbracket \otimes
    \left \llbracket \begin{matrix}
    \psi_p \left(x^{(1)}\right) \\
    \vdots \\
    \psi_p \left(x^{(m)}\right)
    \end{matrix} \right \rrbracket.
  \end{split}
\end{equation}
Note that this tensor is an element of the tensor space $\R^{n_1 \times \dots \times n_p}$, i.e., $\widehat{\PSI}_X$ has no additional column dimension and it holds that
\begin{equation}
  \mat{\widehat{\PSI}_X}{n_1, \dots, n_p}{~} = \Psi_X \cdot [1, \dots, 1]^\top.
\end{equation}

\begin{figure}
  \centering
  \begin{subfigure}{0.5\textwidth}
    \centering
    \caption{}
    \begin{tikzpicture}
        \draw[black] (0,0) -- ++ (1,0) ;
        \draw[black] (1,0) -- ++ (1,0) ;
        \draw[black] (2,0) -- ++ (0.66,0) ;
        \draw[black, dotted] (2.66,0) -- ++ (0.66,0) ;
        \draw[black] (3.33,0) -- ++ (0.66,0) ;
        \draw[black] (4,0) -- ++ (2,0) ;
        \draw[black] (0,0) -- node [label={[shift={(0,-1)}]$n_1$}] {} ++ (0,-0.7) ;
        \draw[black] (1,0) -- node [label={[shift={(0,-1)}]$n_2$}] {} ++ (0,-0.7) ;
        \draw[black] (2,0) -- node [label={[shift={(0,-1)}]$n_3$}] {} ++ (0,-0.7) ;
        \draw[black] (4,0) -- node [label={[shift={(0,-1)}]$n_{p-2}$}] {} ++ (0,-0.7) ;
        \draw[black] (5,0) -- node [label={[shift={(0,-1)}]$n_{p-1}$}] {} ++ (0,-0.7) ;
        \draw[black] (6,0) -- node [label={[shift={(0,-1)}]$n_{p}$}] {} ++ (0,-0.7) ;
        \draw[black] (6,0) -- node [label={[shift={(0,0.2)}]$m$}] {} ++ (0,+0.7) ;
        \node[draw,shape=circle,fill=Blue, scale=0.65] at (0,0){};
        \node[draw,shape=circle,fill=Blue, scale=0.65] at (1,0){};
        \node[draw,shape=circle,fill=Blue, scale=0.65] at (2,0){};
        \node[draw,shape=circle,fill=Blue, scale=0.65] at (4,0){};
        \node[draw,shape=circle,fill=Blue, scale=0.65] at (5,0){};
        \node[draw,shape=circle,fill=Green, scale=0.65] at (6,0){};
    \end{tikzpicture}
  \end{subfigure}%
  \begin{subfigure}{0.5\textwidth}
    \centering
    \caption{}
    \begin{tikzpicture}
        \draw[black] (0,0) -- ++ (0.66,0) ;
        \draw[black, dotted] (0.66,0) -- ++ (0.66,0) ;
        \draw[black] (1.33,0) -- ++ (0.66,0) ;
        \draw[black] (2,0) -- ++ (1,0) ;
        \draw[black] (3,0) -- ++ (1,0) ;
        \draw[black] (4,0) -- ++ (0.66,0) ;
        \draw[black, dotted] (4.66,0) -- ++ (0.66,0) ;
        \draw[black] (5.33,0) -- ++ (0.66,0) ;
        \draw[black] (0,0) -- node [label={[shift={(0,-1)}]$n_1$}] {} ++ (0,-0.7) ;
        \draw[black] (2,0) -- node [label={[shift={(0,-1)}]$n_{i-1}$}] {} ++ (0,-0.7) ;
        \draw[black] (3,0) -- node [label={[shift={(0,-1)}]$n_{i}$}] {} ++ (0,-0.7) ;
        \draw[black] (4,0) -- node [label={[shift={(0,-1)}]$n_{i+1}$}] {} ++ (0,-0.7) ;
        \draw[black] (6,0) -- node [label={[shift={(0,-1)}]$n_{p}$}] {} ++ (0,-0.7) ;
        \draw[black] (3,0) -- node [label={[shift={(0,0.2)}]$m$}] {} ++ (0,+0.7) ;
        \node[draw,shape=circle,fill=Blue, scale=0.65] at (0,0){};
        \node[draw,shape=circle,fill=Blue, scale=0.65] at (2,0){};
        \node[draw,shape=circle,fill=Green, scale=0.65] at (3,0){};
        \node[draw,shape=circle,fill=Blue, scale=0.65] at (4,0){};
        \node[draw,shape=circle,fill=Blue, scale=0.65] at (6,0){};
    \end{tikzpicture}
  \end{subfigure}%
  \caption{TT representation of transformed data tensors: (a) As in \cite{GELSS2019}, the first $p$ cores (blue circles) are given by \eqref{eq: transformed data tensor}. The direct contraction of the two last TT cores in \eqref{eq: transformed data tensor} can be regarded as an operator-like TT core with a row and column mode (green circle). (b) The additional column mode can be shifted to any of the $p$ TT cores.}
  \label{fig: TDT}
\end{figure}
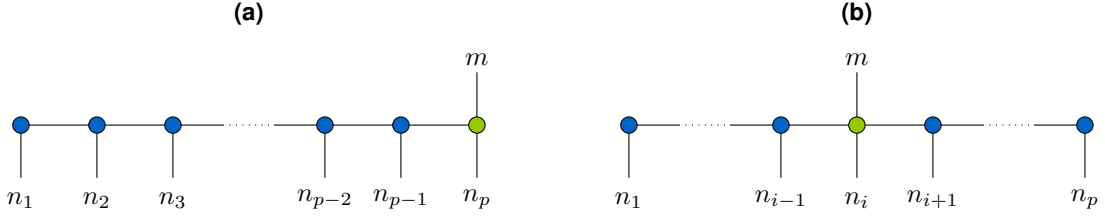

\noindent Now, we define $\widehat{\PSI}_{X,\mu} \in \R^{n_1 \times \dots \times n_p \times m}$ to be the tensor derived from $\widehat{\PSI}_X$ by replacing the $\mu$th core by
\begin{equation}\label{eq: 4D core}
  \widehat{\PSI}_{X,\mu}^{(\mu)}  = \left \llbracket%
  \begin{matrix}
    \begin{bmatrix}
      | & 0 & \cdots & 0 & \, \\
      \raisebox{0.25em}{$\psi_{\mu} (x_1)$} & \vdots &  & \vdots & \, \\
      | & 0 & \cdots & 0 & \, 
    \end{bmatrix}
    & & 0 \\
    & \ddots & \\
    0 & & 
    \begin{bmatrix}
       \, & 0 & \cdots & 0 & |\\
       \, & \vdots &  & \vdots & \raisebox{0.25em}{$\psi_{\mu} (x_m)$}\\
       \, & 0 & \cdots & 0 & |
    \end{bmatrix}
  \end{matrix}%
  \right\rrbracket \in \R^{m \times n_\mu \times m \times m},
\end{equation}
where the outer modes correspond to the rank dimensions while the inner modes represent the dimensions of the matrices. Analogously, for the first and the last core of $\widehat{\PSI}_{X,\mu}$ the non-diagonal core structure has to be used. The $4$-dimensional TT core \eqref{eq: 4D core} naturally represents a component of a TT operator. In what follows, we will not need to store the whole TT core given in \eqref{eq: 4D core}. Otherwise this would mean that we have to save $m^3 \cdot n$ scalar entries (not using a sparse format). However, from a theoretical point of view, $\PSI_{X}$ in Figure~\ref{fig: TDT}~(a) and$\widehat{\PSI}_{X,\mu}$ in Figure~\ref{fig: TDT}~(b) represent the same tensor in $\R^{n_1 \times \dots \times n_p \times m}$, see Appendix~\ref{app: representation of transformed data tensors}.

\subsection{Kernel-based MANDy}
\label{sec: Kernel-based MANDy}

Given a training set $X \in \R^{d \times m}$, the corresponding label matrix $Y \in \R^{d^\prime \times m}$, and a set of basis functions $\psi_{\mu} \colon \R^d \rightarrow \R^{n_\mu}$, $\mu = 1, \dots, p$, we exploit the canonical representation of $\PSI_X$ given in \eqref{eq: transformed data tensor} for kernel-based MANDy. The aim is to solve the optimization problem \eqref{eq: MANDy optimization problem}, i.e., we try to find a coefficient tensor $\XI \in \R^{n_1 \times \dots \times n_p \times d^\prime}$ such that $\XI^\top \PSI_X$ is as close as possible to the corresponding label matrix $Y \in \R^{d^\prime \times m}$. The solution of \eqref{eq: MANDy optimization problem} with minimal Frobenius norm is given by $\XI^\top = Y \PSI_X^+$, cf.~\eqref{eq:pinv}. Note that, compared to standard SINDy/MANDy, the matrix $Y$ here does not necessarily have the same dimensions as $X$. Due to potentially large ranks of the transformed data tensor $\PSI_X$, the direct computation of the pseudoinverse using left- and right-orthonormalization as proposed in \cite{GELSS2019} would be computationally expensive. However, using the identity $\PSI_X^+ = (\PSI_X^\top \PSI_X)^+ \PSI_X^\top$, we can rewrite the coefficient tensor as
\begin{equation}
  \XI^\top = Y \left(\PSI_X^\top \PSI_X\right)^+ \PSI_X^\top.
\end{equation}
The contraction of $\PSI_X^\top$ and $\PSI_{X}$ yields a Gram matrix $G \in \R^{m \times m}$ whose entries are given by the resulting kernel function $k(x, x^\prime) = \innerprod{\PSI(x)}{\PSI(x^\prime)}$, i.e.,
\begin{equation}
  G_{i,j} = k\left(x^{(i)}, x^{(j)}\right) = \innerprod{\PSI\left (x^{(i)} \right)}{\PSI\left(x^{(j)} \right)}.
\end{equation}
Note that due to the tensor structure of $ \PSI_X$, we obtain
\begin{equation}
  k\left(x^{(i)}, x^{(j)}\right)= \prod_{\mu=1}^p \innerprod{\psi_\mu\left(x^{(i)} \right)}{\psi_\mu\left(x^{(j)} \right)},
\end{equation}
i.e., a product of $p$ local kernels.
\begin{remark}
  For the basis functions defined in~\eqref{eq:Psi}, this can be simplified to
\begin{equation}
    k(x, x^\prime) = \prod_{i=1}^d \cos\left(\alpha (x_i-x^\prime_i)\right),
\end{equation}
which is a product of cosine kernels, cf.~\cite{STOUDENMIRE2016a}.
\end{remark}
The product structure of the kernel allows us to compute the Gram matrix $G$ as a Hadamard product (denoted by $\odot$) of $p$ matrices, that is,
\begin{equation}\label{eq: MANDy - Hadamard 1}
  G = \Theta_1 \odot \Theta_2 \odot \dots \odot \Theta_p,
\end{equation}
where $\Theta_\mu \in \R^{m \times m}$ is given by
\begin{equation}\label{eq: MANDy - Hadamard 2}
  \Theta_\mu = \left[\psi_\mu \left(x^{(1)} \right) , \dots, \psi_\mu \left(x^{(m)} \right) \right]^\top \cdot \left[\psi_\mu \left(x^{(1)}\right), \dots , \psi_\mu \left(x^{(m)}\right) \right].
\end{equation}
We now define $ Z := Y \ts G^+ \in \R^{d^\prime \times m} $, which can be obtained by solving the system $ Z \ts G = Y $ (in the least-squares sense if $ G $ is singular). The decision function $ f $, cf.~\eqref{eq:decision function}, is then given by
\begin{equation} \label{eq:kernel classification}
    f(x) = \underbrace{Z \PSI_X^\top}_{=: \XI^\top} \PSI(x)
         = Z
         \begin{bmatrix}
             k(x_1, x) \\
             \vdots \\
             k(x_m, x)
         \end{bmatrix}
\end{equation}
and again only requires kernel evaluations. As above, we can use a sequence of Hadamard products to compute $\PSI_X^\top \PSI(x)$. The classification problem can thus be solved as summarized in Algorithm~\ref{alg:kernel classification}.

\begin{mdframed}[backgroundcolor=boxback,hidealllines=true]
\vspace*{-2ex}
\begin{algorithm}[H]
  \caption{Kernel-based MANDy for classification.}
  \label{alg:kernel classification}
  \begin{spacing}{1.2}
  \begin{algorithmic}[1]
    \Require Training set $X$ and label matrix $Y$, test set $\tilde{X}$, basis functions.
    \Ensure Label matrix $\tilde{Y}$.
    \algrule
    \State Compute $ G $ using \eqref{eq: MANDy - Hadamard 1} and \eqref{eq: MANDy - Hadamard 2}.
    \State Solve $ Z \ts G = Y $.\label{alg:kernel classification - sle}
    \State Define the decision function $f$ using \eqref{eq:kernel classification}.
    \State Apply $f$ to every vector $\tilde{x}$ in the test set, and store the resulting vectors $\tilde{y}$ in the matrix $\tilde{Y}$.
    \State The index $i$ of the largest entry of $ \tilde{y} $ determines the detected label, i.e., set $\tilde{y} = e_i$.
  \end{algorithmic}
  \end{spacing}
\end{algorithm}
\vspace*{-2ex}%
\end{mdframed}

We could also replace the pseudoinverse $ G^+ $ by the regularized inverse $ (G + \varepsilon \ts \Id)^{-1} $, where $ \varepsilon $ is the regularization parameter, which would lead to a slightly different system of linear equations. However, for the numerical experiments in Section~\ref{sec:Numerical results}, we do not use regularization. Algorithm~\ref{alg:kernel classification} is equivalent to \emph{kernel ridge regression} (see, e.g., \cite{SHAWE2004}) with a tensor-product kernel. This is not surprising since we are solving simple least-squares problems.

\begin{remark}
Note that the kernel does not necessarily have to be based on tensor products of basis functions for this method to work, we could also simply use, e.g., a Gaussian kernel, which for the MNIST data set leads to slightly lower but similar classification rates. Tensor-based kernels, however, have an exponentially large yet explicit feature space representation and additional structure that could be exploited to speed up computations. Moreover, the kernel-based algorithm outlined above can in the same way be applied to time-series data to learn governing equations in potentially infinite-dimensional feature spaces.
\end{remark}

Compared to the method proposed in \cite{STOUDENMIRE2016a, STOUDENMIRE2016b}, the advantage of our approach, which can be regarded as a kernel-based formulation of MANDy (or SINDy), is that we can compute a closed-form solution without necessitating any iterations or sweeps. However, even though this approach for classification problems computes an optimal solution of the minimization problem \eqref{eq: MANDy optimization problem}, the runtime as well as the memory consumption of the algorithm depend crucially on the size of the training data set (and also the number of labels) and the resulting coefficient tensor $\XI$ has no guaranteed low-rank structure. We will now propose an alternating optimization method which circumvents this problem.

\subsection{Alternating ridge regression (ARR)}
\label{sec: ALS approach}

In what follows, we will use the TT representation illustrated in Figure~\ref{fig: TDT}\,(b) for the transformed data tensor $\PSI_X \in \R^{n_1 \times \dots \times n_p \times m}$. Even though we do not consider a TT operator, the proposed approach is closely related to the DMRG method~\cite{WHITE1992}, also called \emph{alternating linear scheme} (ALS)~\cite{HOLTZ2012}. As in \cite{STOUDENMIRE2016a, STOUDENMIRE2016b}, the idea here is to compute a low-rank TT approximation of the coefficient tensor $\XI$ by an alternating scheme. That is, a low-dimensional system of linear equations has to be solved for each TT core. Our approach is outlined in Algorithm~\ref{alg: ARR}.

\begin{mdframed}[backgroundcolor=boxback,hidealllines=true]
\vspace*{-2ex}
\begin{algorithm}[H]
  \caption{Alternating ridge regression (ARR) for classification.}
  \label{alg: ARR}
  \begin{spacing}{1.2}
  \begin{algorithmic}[1]
    \Require Training set $X$ and label matrix $Y$, test set $\tilde{X}$, basis functions, initial guesses.
    \Ensure Label matrix $\tilde{Y}$.
    \algrule
    \For{$i=1, \dots, d^\prime$}\hfill\textit{(parallelizable)}
    \State Define $w = Y_{i,:} = [Y^{(1)}_i , \dots , Y^{(m)}_i]$.
    \State Define initial guess $\XI_i$ and right-orthonormalize.
    \State Compute right stack $Q_p, \dots, Q_1$.\label{alg: ARR - right stack 1}
    \For{$\mu = 1, \dots, p-1$}\hfill\textit{(first half sweep)}\label{alg: ARR - start iteration}
    \State Compute $P_\mu$.
    \State Construct micro-matrix $M_\mu$ from $P_\mu, \widehat{\PSI}_{X,\mu}^{(\mu)}, Q_\mu$.
    \State Determine truncated SVD solution of $\min_v \norm{w - v M_\mu}_2$.\label{alg: ARR - subproblem 1}
    \State Apply QR decomposition to extract updated core.
    \EndFor
    \For{$\mu = p, \dots, 1$}\hfill\textit{(second half sweep)}
    \State Compute $Q_\mu$.\label{alg: ARR - right stack 2}
    \State Construct micro-matrix $M_\mu$ from $P_\mu, \widehat{\PSI}_{X,\mu}^{(\mu)}, Q_\mu$.
    \State Determine truncated SVD solution of $\min_v \norm{w - v M_\mu}_2$.\label{alg: ARR - subproblem 2}
    \If{$\mu>1$}
    \State Apply QR decomposition to extract updated core.
    \Else
    \State Set the updated core to a reshape of $v$.\label{alg: ARR - end iteration}
    \EndIf
    \EndFor
    \State Repeat lines \ref{alg: ARR - start iteration}--\ref{alg: ARR - end iteration} to increase accuracy (if needed).
    \EndFor
    \State Define $\XI$ using \eqref{eq: ARR - coefficient tensor} and set $ y = f(x) $ using \eqref{eq:decision function}.
    \State The index of the largest entry of $ y $ determines the detected label, see Algorithm~\ref{alg:kernel classification}.
  \end{algorithmic}
  \end{spacing}
\end{algorithm}
\vspace*{-2ex}%
\end{mdframed}

First, note that instead of solving the minimization problem \eqref{eq: MANDy optimization problem} we can also find separate solutions of
\begin{equation}\label{eq: ARR - minimization problem}
    \min_{\XI_i} \norm{Y_{i,:} - \XI^\top_i \PSI_X}_2
\end{equation}
for each row of $Y$. Since these systems can be solved independently, Algorithm~\ref{alg: ARR} can be easily parallelized. We then use a DMRG/ALS-inspired scheme to split the optimization problem \eqref{eq: ARR - minimization problem} into $p$ subproblems. The micro-matrix $M_\mu$ of such a subproblem can be built from three different parts, namely $\widehat{\PSI}_{X,\mu}^{(\mu)}$, $P_\mu$, and $Q_\mu$. The latter are both collected in a left and right stack in order to avoid repetitive computations. Note that $P_\mu$ is determined by contracting $P_{\mu-1}$ with the $(\mu-1)$th cores of $\XI_i$ and $\widehat{\PSI}_X$. Analogously, $Q_\mu$ is build from $Q_{\mu+1}$ and the $(\mu+1)$th cores of $\XI_i$ and $\widehat{\PSI}_X$.  During the first half sweep of Algorithm~\ref{alg: ARR}, we only have to compute the matrices $P_\mu$ since the used matrices $Q_\mu$ are not based on any updated cores. Afterwards, the matrices $Q_\mu$ are (re-)computed during the second half. See \cite{HOLTZ2012} for further details and Figure~\ref{fig: ALS} for a graphical illustration of the construction of the subproblems and the extraction of the optimized core. Note that it is not necessary to store the (sparse) core $\widehat{\PSI}_{X,\mu}^{(\mu)}$ in its full representation as a $4$-dimensional array in order to construct the matrix $M_\mu$. By using, e.g., NumPy's \texttt{einsum} the TT core can be replaced a (dense) matrix containing the corresponding function evaluations.

\begin{figure}
  \centering
  \begin{subfigure}{0.6\textwidth}
    \centering
    \caption{}
     \begin{tikzpicture}
        \draw[white] (3,1.65) -- ++ (0,-3.1);
        \draw[black] (0,1) -- ++ (0.66,0) ;
        \draw[black, dotted] (0.66,1) -- ++ (0.66,0) ;
        \draw[black] (1.33,1) -- ++ (0.66,0) ;
        \draw[black] (2,1) -- ++ (1,0) ;
        \draw[black] (3,1) -- ++ (1,0) ;
        \draw[black] (4,1) -- ++ (0.66,0) ;
        \draw[black, dotted] (4.66,1) -- ++ (0.66,0) ;
        \draw[black] (5.33,1) -- ++ (0.66,0) ;
        \draw[black] (0,0) -- ++ (0.66,0) ;
        \draw[black, dotted] (0.66,0) -- ++ (0.66,0) ;
        \draw[black] (1.33,0) -- ++ (0.66,0) ;
        \draw[black] (2,0) -- ++ (0.5,0) ;
        \draw[black] (3.5,0) -- ++ (0.5,0) ;
        \draw[black] (4,0) -- ++ (0.66,0) ;
        \draw[black, dotted] (4.66,0) -- ++ (0.66,0) ;
        \draw[black] (5.33,0) -- ++ (0.66,0) ;
        \draw[black] (0,1) -- ++ (0,-1) ;
        \draw[black] (2,1) -- ++ (0,-1) ;
        \draw[black] (3,1) -- ++ (0,0.5) ;
        \draw[black] (4,1) -- ++ (0,-1) ;
        \draw[black] (6,1) -- ++ (0,-1) ;
        \draw[black] (3,1) -- ++ (0,-0.5) ;
        \node[draw,shape=circle,fill=Blue, scale=0.65] at (0,1){};
        \node[draw,shape=circle,fill=Blue, scale=0.65] at (2,1){};
        \node[draw,shape=circle,fill=Green, scale=0.65] at (3,1){};
        \node[draw,shape=circle,fill=Blue, scale=0.65] at (4,1){};
        \node[draw,shape=circle,fill=Blue, scale=0.65] at (6,1){};
        \node[draw,shape=semicircle,rotate=225,fill=Orange, anchor=south,inner sep=2pt, outer sep=0pt, scale=0.75] at (0,0){}; 
        \node[draw,shape=semicircle,rotate=45,fill=white, anchor=south,inner sep=2pt, outer sep=0pt, scale=0.75] at (0,0){};
        \node[draw,shape=semicircle,rotate=225,fill=Orange, anchor=south,inner sep=2pt, outer sep=0pt, scale=0.75] at (2,0){}; 
        \node[draw,shape=semicircle,rotate=45,fill=white, anchor=south,inner sep=2pt, outer sep=0pt, scale=0.75] at (2,0){};
        \node[draw,shape=semicircle,rotate=315,fill=white, anchor=south,inner sep=2pt, outer sep=0pt, scale=0.75] at (4,0){}; 
        \node[draw,shape=semicircle,rotate=135,fill=Orange, anchor=south,inner sep=2pt, outer sep=0pt, scale=0.75] at (4,0){};
        \node[draw,shape=semicircle,rotate=315 ,fill=white, anchor=south,inner sep=2pt, outer sep=0pt, scale=0.75] at (6,0){}; 
        \node[draw,shape=semicircle,rotate=135,fill=Orange, anchor=south,inner sep=2pt, outer sep=0pt, scale=0.75] at (6,0){};
        \draw[black] (2.5,-0.5) -- ++ (1,0) ;
        \draw[black] (3,-0.5) -- ++ (0,0.5) ;
        \node[draw,shape=circle,fill=Red, scale=0.65] at (3,-0.5){};
        \draw[Gray, ->, >=latex] (2.5,-0.4) -- ++ (0,0.3) ;
        \draw[Gray, ->, >=latex] (3.5,-0.4) -- ++ (0,0.3) ;
        \draw[Gray, ->, >=latex] (3,0.1) -- ++ (0,0.3) ;
        \node[inner sep=0, anchor=north] at (1,-0.66) {$\underbrace{\hspace*{2.2cm}}_{P_\mu}$};
        \node[inner sep=0, anchor=north] at (5,-0.66) {$\underbrace{\hspace*{2.2cm}}_{Q_\mu}$};
    \end{tikzpicture}
  \end{subfigure}%
  \begin{subfigure}{0.4\textwidth}
    \centering
    \caption{}
     \begin{tikzpicture}
       \draw[white] (2,1.65) -- ++ (0,-3.1);
       \draw[black] (-0.5,1) -- ++ (1,0) ;
       \draw[black] (0,1) -- ++ (0,0.5) ;
       \node[draw,shape=circle,fill=Red, scale=0.65] at (0,1){};
       \node[] at (1,1-0.025) {$=$};
       \def\x{2}
       \draw[black] (\x-0.5,1) -- ++ (2,0) ;
       \draw[black] (\x,1) -- ++ (0,0.5) ;
       \node[draw,shape=semicircle,rotate=225,fill=Orange, anchor=south,inner sep=2pt, outer sep=0pt, scale=0.75] at (\x,1){}; 
       \node[draw,shape=semicircle,rotate=45,fill=white, anchor=south,inner sep=2pt, outer sep=0pt, scale=0.75] at (\x,1){};
       \node[draw,shape=circle,fill=Gray, scale=0.65] at (\x+1,1){};
       \def\y{0}
       \draw[black] (-0.5,\y) -- ++ (1,0) ;
       \draw[black] (0,\y) -- ++ (0,0.5) ;
       \node[draw,shape=circle,fill=Red, scale=0.65] at (0,\y){};
       \node[] at (1,\y-0.025) {$=$};
       \def\x{2}
       \draw[black] (\x-0.5,\y) -- ++ (2,0) ;
       \draw[black] (\x+1,\y) -- ++ (0,0.5) ;
       \node[draw,shape=semicircle,rotate=315,fill=white, anchor=south,inner sep=2pt, outer sep=0pt, scale=0.75] at (\x+1,\y){};
       \node[draw,shape=semicircle,rotate=135,fill=Orange, anchor=south,inner sep=2pt, outer sep=0pt, scale=0.75] at (\x+1,\y){}; 
       \node[draw,shape=circle,fill=Gray, scale=0.65] at (\x,\y){};
    \end{tikzpicture}
  \end{subfigure}
  \caption{Construction and solution of the subproblem for the $\mu$th core: (a) The $4$-dimensional core of $\widehat{\PSI}_{X, \mu}$ (green circle) is contracted with the matrices $P_\mu$ and $Q_\mu$ constructed by joining the fixed cores of the coefficient tensor (orange circles) with the corresponding cores of the transformed data tensor. The matricization then defines the matrix $M_\mu$. (b) The TT core (red circle) obtained by solving the low-dimensional minimization problem is decomposed (e.g., using QR factorization) into a orthonormal tensor and a triangular matrix. The orthonormal tensor then yields the updated core.}
  \label{fig: ALS}
\end{figure}
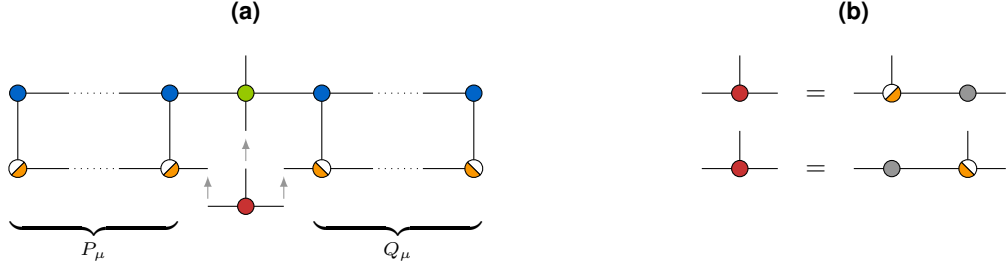

By orthonormalizing the fixed cores of $\XI$ and using truncated SVDs \cite{HANSEN1987} for solving the subsystems we can interpret our approach as a core-wise ridge regression approximating the solution obtained by kernel-based MANDy, see Appendix~\ref{app: Interpretation}. After approximating the coefficient tensor 
\begin{equation}\label{eq: ARR - coefficient tensor}
  \XI = \sum_{i=1}^{d^\prime} \XI_i \otimes e_i,
\end{equation}
the decision function $f$ is given by \eqref{eq:decision function}. The main difference between our approach and the method introduced in \cite{STOUDENMIRE2016a, STOUDENMIRE2016b} is that we do not update the TT cores of $\XI$ using gradient descent steps. Instead we solve a low-dimensional system of linear equations corresponding to the entire training data set whose solution yields the updated core. Moreover, we solve a minimization problem for each row of the label matrix $Y$. Using the modified basis decomposition introduced in Section~\ref{sec: basis decomposition}, it is possible to significantly reduce the storage consumption of the stack, see Algorithm~\ref{alg: ARR} Lines~\ref{alg: ARR - right stack 1} and \ref{alg: ARR - right stack 2}. If we only use a fixed representation for $\PSI_X$ as given in \eqref{eq: transformed data tensor}, the additional mode would lead to a much higher storage consumption of the right stack. Thus, our method provides an efficient construction of the subproblems. 

\section{Numerical results}
\label{sec:Numerical results}

We apply the tensor-based classification algorithms described in Sections~\ref{sec: Kernel-based MANDy} and \ref{sec: ALS approach} to both the MNIST and fashion MNIST data set, choosing the basis defined in \eqref{eq:Psi} and setting $ \alpha \approx 0.59 $. This value was determined empirically for the MNIST data set, but also leads to better classification rates for the fashion MNIST set. Kernel-based MANDy as well as ARR are available in Scikit-TT.\!\footnote{\url{https://github.com/PGelss/scikit_tt}} The numerical experiments were performed on a Linux machine with $128$ GB RAM and an Intel Xeon processor with a clock speed of $3$ GHz and $8$ cores. 

For the first approach, using kernel-based MANDy, we do not apply any regularization techniques. For the ARR approach, we set the TT rank for each solution $\XI_i$, see Algorithm~\ref{alg: ARR}, to $10$ and repeat the scheme $5$ times. Here, we use regularization, i.e., truncated SVDs with a relative threshold of $10^{-2}$ are applied to the minimization problems given in Algorithm~\ref{alg: ARR} (Lines~\ref{alg: ARR - subproblem 1} and \ref{alg: ARR - subproblem 2}). The obtained classification rates for the reduced and full MNIST and fashion MNIST data are shown in Figure~\ref{fig:MNIST_results}.

\begin{figure}
  \centering
  \begin{subfigure}{0.5\textwidth}
    \centering
    \caption{}
    \includegraphics[width=0.95\textwidth]{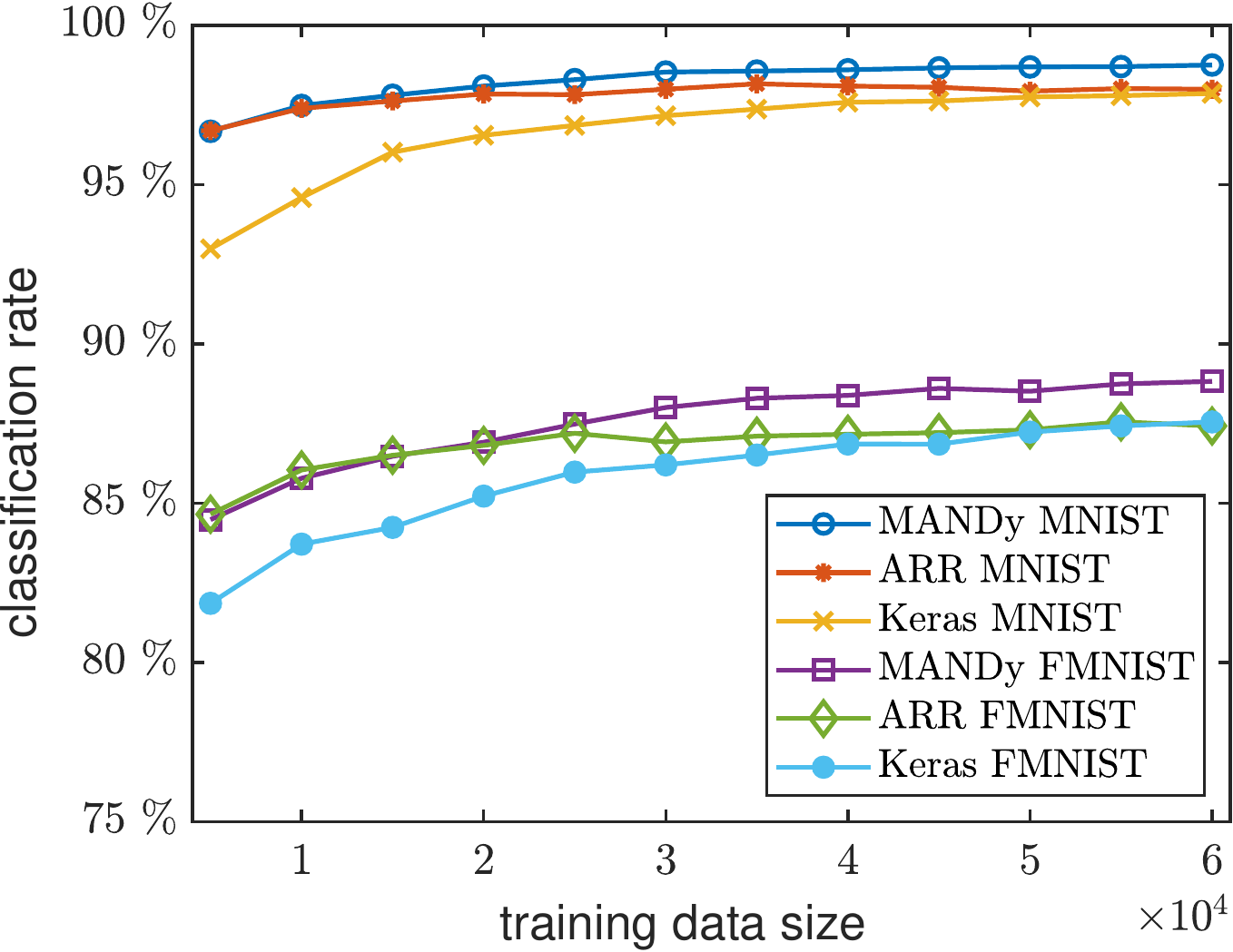}
  \end{subfigure}%
  \begin{subfigure}{0.5\textwidth}
    \centering
    \caption{}
    \includegraphics[width=0.95\textwidth]{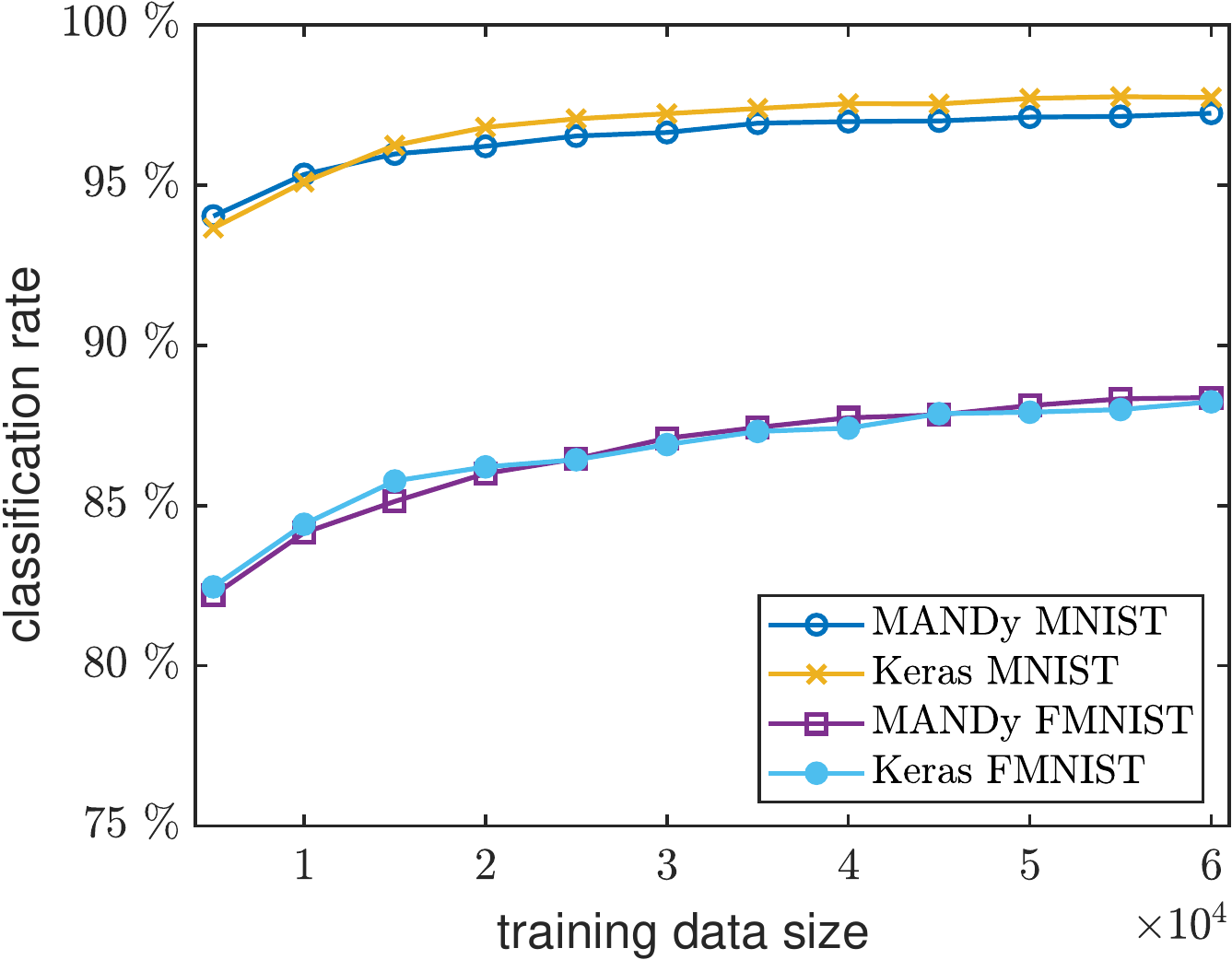}
  \end{subfigure}%
  \caption{Results for MNIST and fashion MNIST: (a) Classification rates for the reduced 14x14 images. (b) Classification rates for full 28x28 images. Reducing the image size by averaging over groups of pixels improves the performance of the algorithm.}
  \label{fig:MNIST_results}
\end{figure}

Similarly to \cite{STOUDENMIRE2016a, STOUDENMIRE2016b}, we first apply the classifiers to the reduced data sets, see Figure~\ref{fig:MNIST_results}\,(a). Using MANDy, we obtain classification rates of up to $ 98.75 \, \% $ for the MNIST and $ 88.82 \, \%$ for the fashion MNIST data set. Using the ARR approach, the classification rates are not monotonically increasing, which simply may be an effect of the alternating optimization scheme. The highest classification rates we obtain are $98.16 \, \%$ for the MNIST data and $87.55 \, \%$ for the fashion MNIST data. We typically obtain a $ 100 \, \% $ classification rate for the training data (as a consequence of the richness of the feature space). This is not necessarily a desired property since the learned model might not generalize well to new data, but seems to have no detrimental effects for the simple MNIST classification problem. As shown in Figure~\ref{fig:MNIST_results}\,(b), kernel-based MANDy can still be applied when considering the full data sets without reducing the image size. Here, we obtain classification rates of up to $97.24 \, \%$ for the MNIST and $88.37 \, \%$ for the fashion MNIST data set. That we obtain lower classification rates for the full images as compared to the reduced ones might be due to the fact that pixel-by-pixel comparisons of images are not expedient. The averaging effect caused by downscaling the images helps to detect coarser features. This is similar to the effect of convolutional kernels and pooling layers. In principle, ARR can also be used for the classification of the full data sets. So far, however, our numerical experiments produced only classification rates significantly lower than those obtained by applying MANDy ($95.94 \, \%$ for the MNIST and $82.18 \, \%$ for fashion MNIST data set). This might be due to convergence issues caused by the kernel. The application to higher-order transformed data tensors and potential improvements of ARR will be part of our future research.

Figure~\ref{fig:MNIST_results} also shows a comparison with \emph{tensorflow}. We run the code provided as a classification tutorial\footnote{\url{www.tensorflow.org/tutorials/keras/basic_classification}} ten times and compute the average classification rate. The input layer of the network comprises 784 nodes (one for each pixel; for the reduced data sets, we thus have only 196 input nodes), followed by two dense layers with 128 and 10 nodes. The layer with 10 nodes is the output layer containing probabilities that a given image belongs to the class represented by the respective neuron. Note that although more sophisticated methods and architectures for these problems exists---see the (fashion) MNIST website for a ranking---, the results show that our tensor-based approaches are competitive with state-of-the-art deep-learning techniques. 

\begin{figure}[htb]
    \centering
    \begin{subfigure}{0.28\textwidth}
        \centering
        \caption{}
        \includegraphics[width=\textwidth]{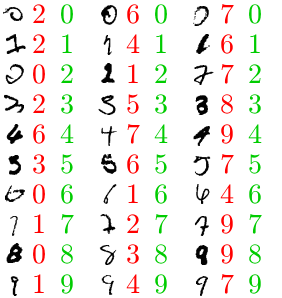}
    \end{subfigure}%
    \hspace{0.032\textwidth}
    \begin{subfigure}{0.328\textwidth}
        \centering
        \caption{}
        \includegraphics[width=\textwidth]{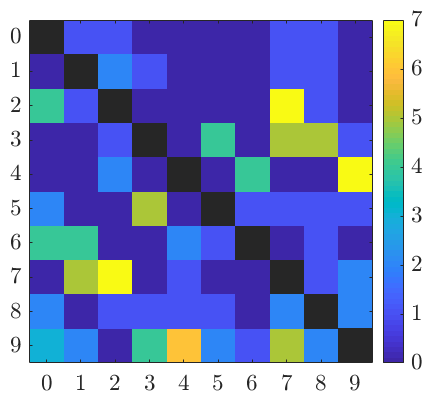} 
    \end{subfigure}%
    \\
    \begin{subfigure}{\textwidth}
        \centering
        \caption{}
        \includegraphics[width=0.64\textwidth]{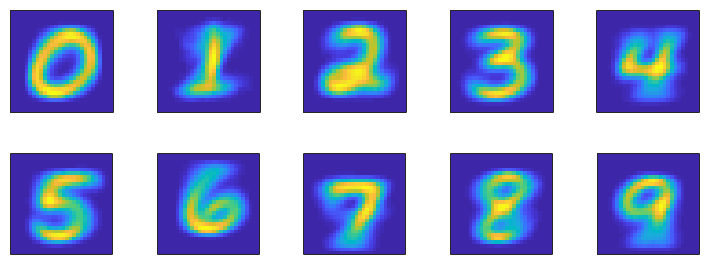}
    \end{subfigure}%
    \caption{MNIST classification: (a)~Images misclassified by kernel-based MANDy described in Section~\ref{sec: Kernel-based MANDy}. The original image is shown in black, the identified label in red, and the correct label in green. (b)~Histograms illustrating which categories are misclassified most often. The rows represent the correct labels of the misclassified image and the columns the detected labels. (c)~Visualizations of the learned classifiers showing a heat map of the classification function obtained by applying it to images that differ in one pixel.}
    \label{fig:MNIST_misclassified}
\end{figure}

In order to understand the numerical results for the MNIST data set (obtained by applying kernel-based MANDy to all $60\ts000$ training images), we analyze the misclassified images, examples of which are displayed in Figure~\ref{fig:MNIST_misclassified}\,(a). For misclassified images $x$, the entries of $ f(x) $, see~\eqref{eq:kernel classification}, are often numerically zero, which implies that there is no other image in the training set that is similar enough so that the kernel can pick up the resemblance. Some of the remaining misclassified digits are hard to recognize even for humans. Histograms demonstrating which categories are misclassified most often are shown in Figure~\ref{fig:MNIST_misclassified}\,(b). Here, we simply count the instances where an image with label $i$ was assigned the wrong label $j$. The digits $ 2 $ and $ 7 $ as well as $ 4 $ and $ 9 $ are confused most frequently. Additionally, we wish to visualize what the algorithm detects in the images. To this end, we perform a sensitivity analysis as follows: Starting with an image whose pixel values are constant everywhere (zero or any other value smaller than one, we choose $ 0.5 $), we set pixel $ (i, j) $ to one and compute $ y = f(x) $ for this image. The process is repeated for all pixels. For each label, we then plot a heat map of the values of $ y $. This tells us which pixels contribute most to the classification of the images. The resulting maps are shown in Figure~\ref{fig:MNIST_misclassified}\,(c). Except for the digit $ 1 $, the results are highly similar to the images obtained by averaging over all images containing a certain digit. 

\begin{figure}[htb]
    \centering
    \begin{subfigure}{0.28\textwidth}
        \centering
        \caption{}
        \includegraphics[width=\textwidth]{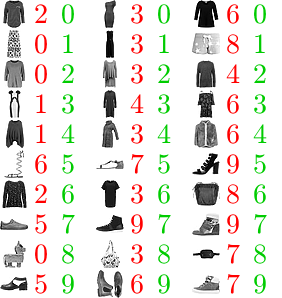}
    \end{subfigure}%
    \hspace{0.032\textwidth}
    \begin{subfigure}{0.328\textwidth}
        \centering
        \caption{}
        \includegraphics[width=\textwidth]{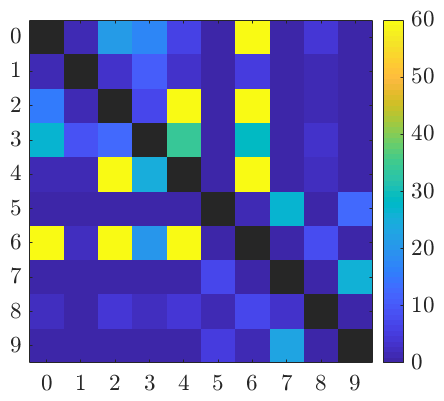} 
    \end{subfigure}%
    \\
    \begin{subfigure}{\textwidth}
        \centering
        \caption{}
        \includegraphics[width=0.64\textwidth]{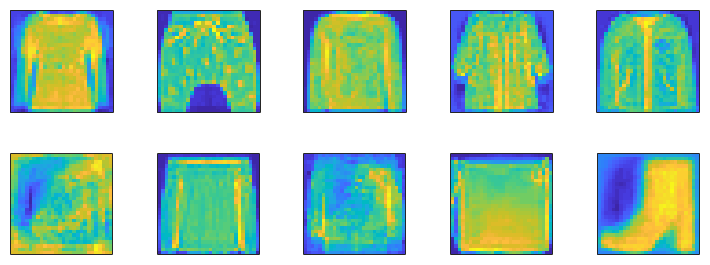}
    \end{subfigure}%
    \caption{Fashion MNIST classification: (a)~Misclassified images. (b)~Histogram of misclassified images. (c)~Visualizations of the learned classifiers.}
    \label{fig:FMNIST_misclassified}
\end{figure}

Figure~\ref{fig:FMNIST_misclassified} shows examples of misclassified images and the corresponding histogram as well as the results of the sensitivity analysis for the fashion MNIST data set. We see that the images of shirts (6) are most difficult to classify (due to the ambiguity in the category definitions), whereas trousers (1) and bags (8) have the lowest misclassification rates (probably due to their distinctive shapes). In contrast to the MNIST data set, the results of the sensitivity analysis differ widely from the average images. The classifier for coats (4), for instance, ``looks for'' a zipper and coat pockets, which are not visible in the \emph{average coat}, and the classifier for dresses (3) seems to base the decision on the presence of creases, which are also not distinguishable in the \emph{average dress}. The interpretation of other classifiers is less clear, e.g., the ones for sandals (5) and sneakers (7) seem to be contaminated by other classes.

Comparing the runtimes of both approaches applied to the reduced data sets with $60\ts000$ training images, kernel-based MANDy needs approximately one hour for the construction of the decision function \eqref{eq:kernel classification}. On the other hand, ARR needs less than $10$ minutes to compute the coefficient tensor assuming we parallelize Algorithm~\ref{alg: ARR}. 

\section{Conclusion}
\label{sec:Conclusion}

In this work, we presented two different tensor-based approaches for supervised learning. We showed that a kernel-based extension of MANDy can be utilized for image classification. That is, extending the method to arbitrary least-squares problems (originally, MANDy was developed to learn governing equations of dynamical systems) and using sequences of Hadamard products for the computation of the pseudoinverse, we were able to demonstrate the potential of kernel-based MANDy by applying it to the MNIST and fashion MNIST data sets. Additionally, we proposed the alternating optimization scheme ARR, which approximates the coefficient tensors by low-rank TT decompositions. Here, we used a mutable tensor representation of the transformed data tensors in order to construct low-dimensional regression problems for optimizing the TT cores of the coefficient tensor. 

Both approaches use an exponentially large set of basis functions in combination with least-squares regression techniques on a given set of training images. The results are encouraging and show that methods exploiting tensor products of simple basis functions are able to detect characteristic features in image data. The work presented in this paper constitutes a further step towards tensor-based techniques for machine learning.

The reason why we can handle the extremely high-dimensional feature space spanned by basis functions is its tensor-product format. Besides the general questions of the choice of basis functions and the expressivity of these functions, the rank-one tensor products that were used in this work can in principle be replaced by other structures which might result in higher classification rates. For instance, the transformation of an image could be given by a TT representation with higher ranks or hierarchical tensor decompositions (with the aim to detect features on different levels of abstraction). Furthermore, we could define different basis functions for each pixel, vary the number of basis functions per pixel, or define basis functions for groups of pixels. 

Even though kernel-based MANDy computes the minimum norm solution of the considered regression problems as an exact TT decomposition, the method is likely to suffer from high ranks of the transformed data tensors and might thus not be competitive for large data sets. At the moment, we are computing the Gram matrix for the entire training data set. However, a possibility to speed up computations and to lower the memory consumption would be to exploit properties of the kernel. That is, if the kernel almost vanishes if two images differ significantly in at least one pixel (as it is the case for the specific kernel used in this work, provided that the originally proposed value $ \alpha = \tfrac{\pi}{2}$ is used), the Gram matrix is essentially sparse when setting entries smaller than a given threshold to zero. Using sparse solvers would allow us to handle much larger data sets. Moreover, the construction of the Gram matrix is highly parallelizable and it would be possible to use GPUs to assemble it in a more efficient fashion.

Further modifications of ARR such as different regression methods for the subproblems, an optimized ordering of the TT cores, and specific initial coefficient tensors can help to improve the results. We provided an explanation for the stability of ARR, but the properties of alternating regression schemes have to be analyzed in more detail in the future.

\section*{Acknowledgments}

This research has been funded by Deutsche Forschungsgemeinschaft (DFG) through grant CRC 1114 \emph{``Scaling Cascades in Complex Systems''}. Part of this research was performed while S.K.~was visiting the Institute for Pure and Applied Mathematics (IPAM), which is supported by the National Science Foundation (Grant No. DMS-1440415). We would like to thank Michael G{\"o}tte and Alex Goe{\ss}mann from the TU Berlin for interesting discussions related to tensor decompositions and system identification.

\bibliographystyle{unsrturl}
\bibliography{references}

\newpage

\appendix

\section{Appendix}

\subsection{Representation of transformed data tensors}
\label{app: representation of transformed data tensors}

\begin{proposition}
  For all $i \in \{1, \dots, p\}$, it holds that
  \begin{equation*}
    \mat{\widehat{\PSI}_{X,i}}{n_1, \dots, n_d}{m} = \mat{\PSI_{X}}{n_1, \dots, n_d}{m}.
  \end{equation*}
  That is, the TT decompositions $\widehat{\PSI}_{X,i}$ and $\PSI_X$ represent the same tensor in $\R^{n_1 \times \dots \times n_p \times m}$.
\end{proposition}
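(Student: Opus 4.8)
The plan is to prove the identity entrywise: I will compute a general entry of $\widehat{\PSI}_{X,i}$ directly from its TT cores and show that it coincides with the entry of $\PSI_X$, namely $(\PSI_X)_{i_1, \dots, i_p, j} = \prod_{\mu=1}^p \psi_{\mu, i_\mu}(x^{(j)})$, which is the formula recorded just after \eqref{eq: transformed data tensor}. Since the unfolding $\mat{\cdot}{n_1, \dots, n_p}{m}$ is merely a reindexing of the entries of a tensor in $\R^{n_1 \times \dots \times n_p \times m}$, matching all entries is equivalent to the claimed equality of unfoldings.

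First I would treat the generic case $2 \le i \le p-1$ and record the relevant core structures explicitly. The first core acts as $(\PSI_X^{(1)})_{1, i_1, k_1} = \psi_{1, i_1}(x^{(k_1)})$; each interior core $\mu \neq i$ is diagonal in its rank indices, so $(\PSI_X^{(\mu)})_{k_{\mu-1}, i_\mu, k_\mu} = \psi_{\mu, i_\mu}(x^{(k_{\mu-1})}) \ts \delta_{k_{\mu-1}, k_\mu}$; and the last core gives $(\widehat{\PSI}_X^{(p)})_{k_{p-1}, i_p, 1} = \psi_{p, i_p}(x^{(k_{p-1})})$. The replaced core \eqref{eq: 4D core} additionally carries the column mode $c$, and from its block-diagonal shape one reads off $(\widehat{\PSI}_{X,i}^{(i)})_{k_{i-1}, i_i, c, k_i} = \psi_{i, i_i}(x^{(k_{i-1})}) \ts \delta_{k_{i-1}, k_i} \ts \delta_{c, k_{i-1}}$, where the first two indices are the left rank and physical mode and the last two are the column mode and right rank.

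Next I would carry out the TT contraction $\sum_{k_1, \dots, k_{p-1}}$ and exploit the Kronecker deltas. The diagonal cores force $k_1 = k_2 = \dots = k_{p-1} =: a$, collapsing the multi-sum to a single index, while the extra factor $\delta_{c, k_{i-1}} = \delta_{c, a}$ from the operator-like core pins $a = c$. What survives is $\sum_{a} \delta_{c, a} \prod_{\mu=1}^p \psi_{\mu, i_\mu}(x^{(a)}) = \prod_{\mu=1}^p \psi_{\mu, i_\mu}(x^{(c)})$, which is exactly $(\PSI_X)_{i_1, \dots, i_p, c}$. Crucially, the position $i$ at which the column mode is inserted never enters the final product, so the result is independent of $i$, which is precisely the content of the proposition.

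Finally I would dispose of the boundary cases $i = 1$ and $i = p$, where, as noted after \eqref{eq: 4D core}, the non-diagonal (row/column) core replaces the block-diagonal one because $r_0 = r_p = 1$; these are handled by the identical contraction with one fewer rank sum on the affected side. I expect the only genuine obstacle to be the index discipline in the four-dimensional core: one must keep the two rank indices, the physical mode $i_i$, and the column mode $c$ straight, and verify that the single factor $\delta_{c, k_{i-1}}$ is what couples the column mode to the already-collapsed rank chain. Once that coupling is correctly identified, the remainder is a routine collapse of Kronecker deltas.
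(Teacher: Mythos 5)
Your proposal is correct and follows essentially the same route as the paper's proof: an entrywise TT contraction in which the diagonal structure of the interior cores forces all rank indices to coincide and the extra Kronecker delta in the four-dimensional core \eqref{eq: 4D core} pins that common index to the column mode, collapsing the sum to $\prod_{\mu=1}^p \psi_{\mu,i_\mu}(x^{(j)})$. Your version merely makes the deltas explicit where the paper states the nonzero conditions in words; the boundary cases are dispatched identically.
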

\begin{proof}
  An entry of $\widehat{\PSI}_{X, \mu}$, $1 < \mu < p$, is given by
  \begin{equation*}
      \left( \widehat{\PSI}_{X, \mu} \right)_{i_1, \dots, i_p, j} = \sum_{k_1 = 1}^m \dots \sum_{k_{p-1} = 1}^m \left(\widehat{\PSI}_{X, \mu}^{(1)} \right)_{1, i_1, k_1} \cdot \ldots \cdot \left(\widehat{\PSI}_{X, \mu}^{(\mu)} \right)_{k_{\mu-1}, i_\mu,j, k_\mu} \cdot \ldots \cdot \left(\widehat{\PSI}_{X, \mu}^{(p)} \right)_{k_{p-1}, i_p, 1}.
  \end{equation*}    
  By definition, 
  \begin{equation*}
    \left(\widehat{\PSI}_{X, \mu}^{(\mu)} \right)_{k_{\mu-1}, i_\mu,j, k_\mu} \neq 0 \quad \Leftrightarrow \quad k_{\mu-1} = j = k_\mu.
  \end{equation*}
  On the other hand, an entry of $\widehat{\PSI}_{X, \mu}^{(\nu)}$ with $\nu \neq \mu$ and $1 < \nu < p$ is nonzero if and only if $k_{\nu-1} = k_\nu$. It follows that
  \begin{equation*}
    \begin{split}
      \left( \widehat{\PSI}_{X, \mu} \right)_{i_1, \dots, i_p, j} &= \left(\widehat{\PSI}_{X, \mu}^{(1)} \right)_{1, i_1, j} \cdot \ldots \cdot \left(\widehat{\PSI}_{X, \mu}^{(\mu)} \right)_{j, i_\mu,j, j} \cdot \ldots \cdot \left(\widehat{\PSI}_{X, \mu}^{(p)} \right)_{j, i_p, 1}\\
      &= \psi_{1,i_1}(x_j) \cdot \ldots \cdot \psi_{\mu,i_\mu}(x_j)  \cdot \ldots \cdot \psi_{p,i_p}(x_j),
    \end{split}
  \end{equation*}
  This can be shown in an analogous fashion for $\mu=1$ and $\mu=p$.
\end{proof}

\subsection{Interpretation of ARR as ALS ridge regression}
\label{app: Interpretation}

The following reasoning will elucidate the relation between ARR, ridge regression, and kernel-based MANDy. We only outline the rough idea without concrete proofs. Let $R_\mu$ denote the retraction operator, see \cite{HOLTZ2012}, consisting of the fixed TT cores $\XI^{(1)}, \dots, \XI^{(\mu-1)}$ and $\XI^{(\mu+1)}, \dots, \XI^{(p)}$ of the solution $\XI$ at any iteration step of Algorithm~\ref{alg: ARR}. Furthermore, assume that $\XI^{(1)}, \dots, \XI^{(\mu-1)}$ are left- and $\XI^{(\mu+1)}, \dots, \XI^{(p)}$ right-orthonormal. In Lines~\ref{alg: ARR - subproblem 1} and \ref{alg: ARR - subproblem 2} of Algorithm~\ref{alg: ARR}, we consider the system (with a slight abuse of notation)
\begin{equation*}
  y = M_\mu x = \left(\PSI_X^\top \cdot R_\mu \right)  x.
\end{equation*}
The application of a truncated SVD to the matricization of $\PSI_X^\top \cdot R_\mu$ (as done in Algorithm~\ref{alg: ARR}) is then similar to a regularization in the form of
\begin{equation}\label{eq: Tikhonov}
  \min_x \left\{ \norm{y- M_\mu x}_2^2 + \varepsilon \norm{x}_2^2 \right\}
\end{equation}
with appropriate regularization parameter $\varepsilon$, i.e., $x \approx M_\mu^+ y$ for both approaches, see \cite{HANSEN1987, GROETSCH1993}. The formulation~\eqref{eq: Tikhonov} is known as Tikhonov's smoothing functional, ridge regression, or $\ell^2$ regularization (which, of course, could also directly be applied in Algorithm~\ref{alg: ARR}). The solution of \eqref{eq: Tikhonov} is also the solution of the regularized normal equation
\begin{equation*}
  M_\mu^\top y = \left( M_\mu^\top M_\mu + \varepsilon \ts \Id \right)x,
\end{equation*}
see, e.g., \cite{ZHDANOV2012}. Since $R^\top_\mu  R_\mu = \Id$, it follows that
\begin{equation*}
  \left( R^\top_\mu  \PSI_X \right)  y = \left( R^\top_\mu \left( \PSI_X \PSI_X^\top + \varepsilon \ts \Id \right) R_\mu \right)  x.
\end{equation*}
In fact, this is a subproblem corresponding to the application of ALS \cite{HOLTZ2012} to the tensor-based system
\begin{equation}\label{eq: ALS normal equation}
  \PSI_X y = \left( \PSI_X \PSI_X^\top + \varepsilon \ts \Id \right) \XI.
\end{equation}
Note that all requirements for the application of ALS are satisfied since $\PSI_X \PSI_X^\top + \varepsilon \ts \Id $ is a symmetric positive definite tensor operator and $R_\mu$ is orthonormal. The system of linear equations given in \eqref{eq: ALS normal equation} is then equivalent to the minimization problem
\begin{equation*}
  \min_{\XI} \left\{ \norm{y - \PSI_X^T \XI}_2^2 + \varepsilon \norm{\XI}_2^2 \right\}.
\end{equation*}
For sufficiently small $\varepsilon$, it holds that $\XI \approx \PSI_X^+ y$, see \cite{BARATA2012}, meaning Algorithm~\ref{alg: ARR} computes an approximation of the coefficient tensor resulting from the application of kernel-based MANDy, see Section~\ref{sec: Kernel-based MANDy}.

\end{document}